\newtheorem{proposition}{Proposition}
\DeclarePairedDelimiterX{\infdivx}[2]{(}{)}{%
  #1\;\delimsize\|\;#2%
}
\newcommand{\infdiv}{D_{KL}\infdivx}
\DeclareMathOperator{\E}{\mathbb{E}}
\DeclareMathOperator{\pmd}{p_{model}}
\DeclareMathOperator{\pgb}{p_{\text{global}}}
\renewcommand{\E}{\mathbb{E}}
\icmltitlerunning{Trading Off Diversity and Quality in Natural Language Generation}
\begin{document}

\twocolumn[
\icmltitle{Trading Off Diversity and Quality in Natural Language Generation}



\icmlsetsymbol{equal}{*}

\begin{icmlauthorlist}
\icmlauthor{Hugh Zhang}{equal,google,stanford}
\icmlauthor{Daniel Duckworth}{equal,google}
\icmlauthor{Daphne Ippolito}{google,upenn}
\icmlauthor{Arvind Neelakantan}{openai}
\end{icmlauthorlist}

\icmlaffiliation{google}{Google Brain}
\icmlaffiliation{openai}{OpenAI}
\icmlaffiliation{stanford}{Stanford University}
\icmlaffiliation{upenn}{University of Pennsylvania}

\icmlcorrespondingauthor{Hugh Zhang}{hughz@stanford.edu}

\icmlcorrespondingauthor{Daniel Duckworth}{duckworthd@google.com }
\icmlkeywords{Machine Learning, Natural Language Processing, ICML}

\vskip 0.3in
]



 \printAffiliationsAndNotice{\icmlEqualContribution} 
\begin{abstract}
For open-ended language generation tasks such as storytelling and dialogue, choosing the right decoding algorithm is critical to controlling the tradeoff between generation \emph{quality} and \emph{diversity}.
However, there presently exists no consensus on which decoding procedure is best or even the criteria by which to compare them. 
We address these issues by casting decoding as a multi-objective optimization problem aiming to simultaneously maximize both response quality and diversity. 
Our framework enables us to perform the first large-scale evaluation of decoding methods along the entire quality-diversity spectrum. 
We find that when diversity is a priority, all methods perform similarly, but when quality is viewed as more important, the recently proposed nucleus sampling \cite{holtzman2019curious} outperforms all other evaluated decoding algorithms.
Our experiments also confirm the existence of the `likelihood trap', the counter-intuitive observation that high likelihood sequences are often surprisingly low quality.
We leverage our findings to create and evaluate an algorithm called \emph{selective sampling} which tractably approximates globally-normalized temperature sampling.
\end{abstract}

\section{Introduction}

Generative language models are applicable for a wide variety of tasks including writing articles, composing Shakespearean sonnets, or engaging in conversation.
For nearly all of these goals, human judgments are the sole way to credibly evaluate the quality of the generated text, rendering it prohibitively expensive to optimize directly over the desired objectives.
Researchers typically address this issue by adopting a two-stage process.
\begin{figure}[h]
    \centering
    \includegraphics[width=\columnwidth]{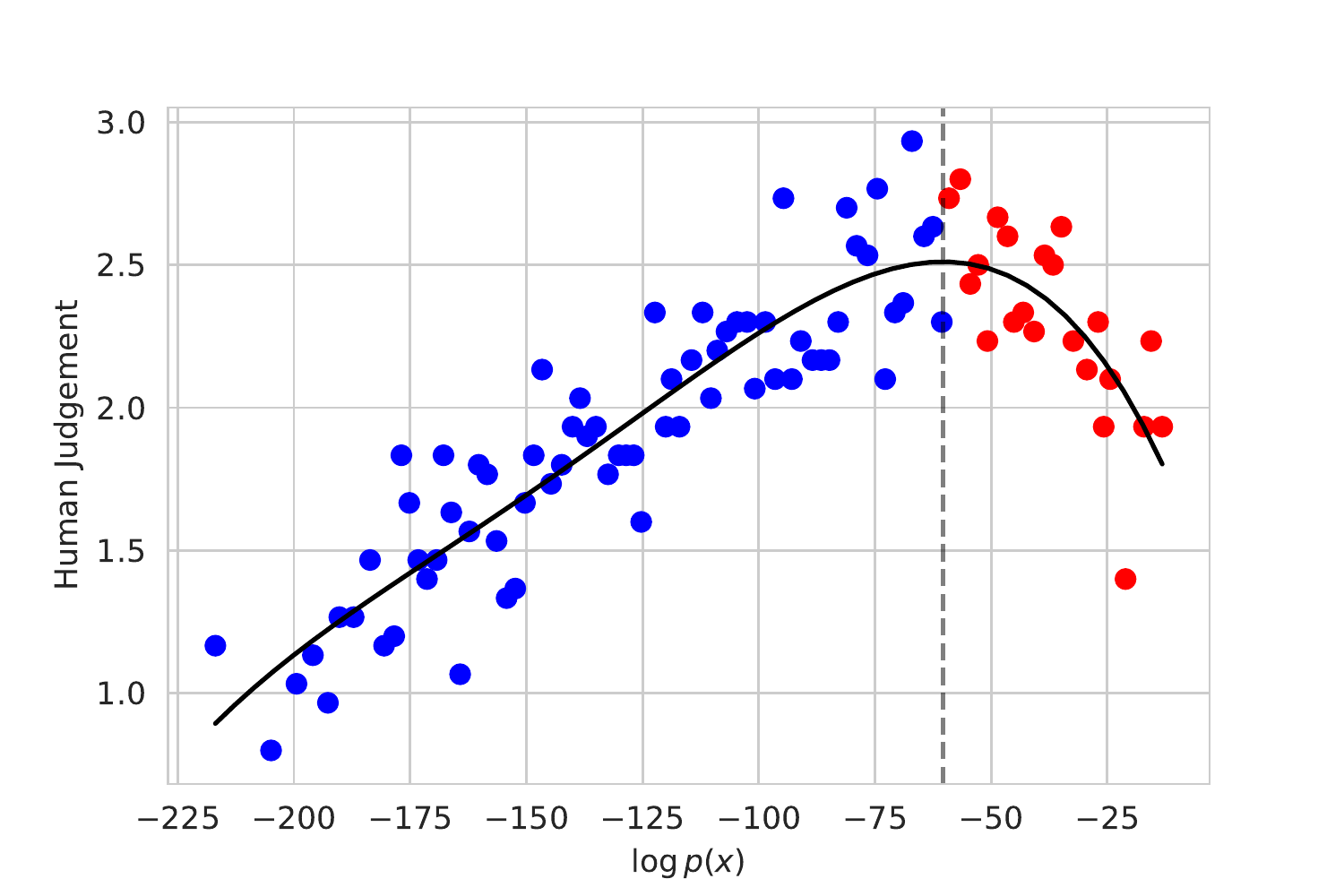}
    \caption{\textbf{The Likelihood Trap.} We asked 146 crowdworkers to rate the quality of 100 sentences across a variety of model likelihoods. While model log likelihoods are generally positively correlated with average human quality judgments, we notice an inflection point after which they become negatively correlated. Each point in the graph represents the average crowdworker rating of 5 sentences with similar model likelihoods. We discuss this phenomenon in more depth in Section~\ref{likelihoodtrap}.}
    \label{fig:likelihood-trap}
\end{figure}
At train time, models seek to imitate a human-written text corpus as a proxy for the true objective (e.g. higher quality samples).
At inference time, models generate text sequences via a decoding algorithm that better optimizes the desired success criteria given the original predictions from the network. 
Nearly all major breakthroughs in image and language generation over the past few years \cite{radford2019language, zhang2019dialogpt, fan2018hierarchical} have adopted this two stage process where the model probability distributions differ between train and inference time.



%
This work examines decoding methods for language models, which are well known to be critical for performance in language generation \citep{ippolito2019human}.
Recent efforts for improving generative language models models have focused primarily on altering the model architecture \citep{vaswani2017attention,gehring2017convolutional}, training method \citep{de2019training} and model size \citep{radford2019language,adiwardana2020towards}.
While effort has also been made towards improving decoders \cite{vijayakumar2016diverse, li2016mutual, ippolito2019comparison}, there has been significantly less progress towards evaluating improvements in decoder performance, especially for open-ended generative tasks where successful models must generate a \emph{diverse} spectrum of high \emph{quality} answers rather than merely a single output. 

For many tasks, these two criteria of quality and diversity are not always equally important. In machine translation, the most important criteria is to produce an accurate, high quality translation of the input; generating a variety of alternative translations is also useful, but not if it comes at the cost of correctness. Meanwhile, in open domain dialogue the goal is often to sustain an enjoyable conversation with a human conversational partner and as such, a higher premium is placed on diversity.

To give a concrete example for the case of dialogue, the phrase ``I don’t know'' is usually a perfectly reasonable remark, and it appears quite often during normal human conversation. However, a chatbot that repeats ``I don’t know'' on every turn of dialogue makes for a very poor conversationalist. In such open-ended domains, being able to converse about a wide variety of topics with the occasional odd remark is highly preferred to merely repeating the safest possible remark over and over \citep{li2016diversity}.


To simultaneously capture both of these criteria, we propose framing the goal of generative language models as a multi-objective optimization over both \emph{quality} and \emph{diversity}. 
The proposed framework is flexible enough to encompass tasks that traditionally place low emphasis on diversity such as machine translation or summarization and others with high diversity such as storytelling.

Furthermore, the proposed framework enables us to evaluate existing decoding algorithms by comparing their performance along the entire quality-diversity spectrum. 
We compare a variety of commonly-used decoding algorithms in the first large-scale study of decoder quality, utilizing over 38,000 ratings on almost 10,000 samples.
We find that when diversity is highly valued, all decoders perform similarly, but when quality is viewed as more important, the recently proposed nucleus sampling \cite{holtzman2019curious} outperforms all other evaluated decoding algorithms.

Additionally, we use our framework to investigate the commonly held intuition that model likelihood is directly correlated with human quality judgments.
First, we explicitly test this belief by measuring the relationship between the quality of a sentence as judged by human raters and its likelihood under a generative model. 
Our findings confirm the existence of a \emph{likelihood trap}, the counter-intuitive observation that the highest likelihood sentences are of surprisingly low quality, despite a generally positive relationship between model likelihoods and human quality judgments. 
While this finding has been observed across a wide variety of language generation tasks and models ranging from news generation to machine translation \cite{cohen2018unconstrained,holtzman2019curious}, to our knowledge we are the first to explicitly quantify the relationship between the two at all points in the model probability space.

Secondly, we propose and evaluate selective sampling, \emph{selective sampling}, a decoder which emphasizes high probability sentences by drawing samples from the global temperature-adjusted model distribution.
While this has traditionally been considered intractable due to the difficulty of computing the partition function, we propose a procedure that uses rejection sampling to directly sample from the desired distribution without explicitly computing the partition function.
When evaluating this decoder alongside existing token-by-token decoders, we discover that it performs poorly even when taking the likelihood trap into account, suggesting that local token-by-token decoders may be capable of capturing structure that a global decoder does not.

\section{Framework}
\label{sec:framework}

In this section, we introduce a framework for trading off quality and diversity in language generation. Let $\mathcal{X}$ denote the space of all possible generated sentences. We consider autoregressive language models that decompose the likelihood of a sequence $(x_1, x_2, \ldots, x_n) = x_{1:n} \in \mathcal{X}$ token-by-token in a left-to-right fashion \cite{hamilton1994time, sutskever2014sequence}. Specifically, the (conditional) likelihood of the sequence is:

\begin{align}
    \pmd(x_{1:n} \mid c) = \prod_{i=1}^n \pmd(x_i | x_{1:i-1}, c)
    \label{eqn:temperature-sampling}
\end{align}

where $c$ is any additional conditioning signal, such as the previous turn of dialogue.
Random sampling is the decoding procedure that follows naturally from the factorization of the model's joint distribution where tokens are sampled one-at-a-time according to the model's conditional distribution, $\pmd(x_i | x_{1:i-1}, c)$.
Often $\pmd$ is not sampled from directly; it is first post-processed by a decoder to bias it toward already high-likelihood tokens.

In the proposed framework, we evaluate the quality of a single sentence $x\in\mathcal{X}$ by asking humans for a quality judgment $HJ(x)$.
We can define the quality $Q$ of a model as the expected human ``quality'' judgment for sentences drawn from it:
\begin{align*}
Q(p) = \E_{x \sim p} [HJ(x)]
\end{align*}
We measure the diversity of a model via the Shannon entropy $H$ \cite{shannon1948mathematical}, a diversity metric widely used across many fields beyond computer science including biology, economics, chemistry, and physics. Shannon entropy is given by:
\begin{align*}
H(p) = - \E_{x \sim p} [\log p(x)]    
\end{align*}
This allows us to define our multi-objective optimization problem as maximizing the following goal $G$:
\begin{align*}
G(p) = Q(p) + \lambda H(p)    
\end{align*}
where $\lambda$ is the task-specific measure of the relative importance of diversity and quality.
For open-ended tasks such as dialogue that place a premium on variety, decoder performance under large $\lambda$ is critical.
For more closed domain tasks such as summarization or machine translation, performance under smaller $\lambda$ (including possibly 0) is more important.

Ideally, one would optimize directly over goal $G$, but its dependence on human judgments makes direct optimization infeasible in practice. Instead, prior works optimize a proxy objective (such as the KL divergence) then employ a decoding algorithm to ``warp'' model $\pmd$ post-hoc towards higher values of $G$.


In the following section, we relate our objective $G$ to existing decoders and investigate a novel decoding algorithm that normalized \emph{globally} across all possible sequences rather than simply token-by-token.

\section{Selective Sampling}
\label{likelihoodtrap}
\subsection{The Likelihood Trap}
Sequence likelihood is commonly used as a heuristic for selecting high-quality generations.
In the extreme, beam search approximates finding the \textit{single} most likely generation $x^* = \arg\max \log \pmd(x)$ and is the approach principally adopted in machine translation \cite{koehn2004pharaoh}.

However, prior work has suggested that this assumption of a monotonically positive relationship between sequence likelihood and sequence quality breaks down at the extremes.
For example, it is well known in the machine translation and image captioning communities that after a certain point, increasing the beam size \emph{hurts} BLEU scores and other measures of quality \cite{stahlberg-byrne-2019-nmt, koehn2017six, vinyals2016show}. More recently \citet{holtzman2019curious} observe similar phenomena for open-ended generation where the highest likelihood sentences degenerate into extreme repetition.

We empirically quantify the relationship between sequence likelihoods and human quality judgments by sub-sampling a large number of context-response pairs representing a wide variety of model log likelihoods. We then request human crowdworkers to rate the quality of each response given the context on a five-point ``Terrible''-to-``High Quality'' scale.

\newcolumntype{b}{>{\hsize=1.5\hsize}X}
\newcolumntype{s}{>{\hsize=.5\hsize}X}

\begin{table*}[h!]
    \renewcommand{\arraystretch}{1.3}
    \small
    \label{tab:samples-table}
    \begin{tabularx}{\textwidth}{|s|b|c|c|}
        \hline
        \centering \textbf{Context} & \centering \textbf{Response} & \centering \bm{$\log p(x)$}& \textbf{Classification} \\
        \hline
        \vspace{0.1ex}
        \multirow{3}{*}{\shortstack[l]{The Atlanta Falcons \\ have started the 2015 \\
        season 4-0 under new \\
        head coach Dan Quinn. \\
        Quarterback Matt Ryan \\
        has the ...}} & ... mental \textcolor{blue}{Tough O'Rourke Tough apology assessment category of virtue from Boser' Blog here.} It's got letters and images on it and is utterly ... & -177 & \textcolor{blue}{Nonsense} \\\cline{2-4}
        & ... team afloat and looks closer to the 2010 Atlanta Falcons. Starting cornerback Desmond Trufant was one of the top players on the 2014 ... & -74 & Reasonable \\\cline{2-4}
        & ... team in the thick of the NFC South race. \textcolor{red}{The Atlanta Falcons have started the 2015 season 4-0 under new head coach Dan Quinn. Quarter}... & -14 & \textcolor{red}{Repetition}
        \\\cline{2-4}
        \Xhline{2\arrayrulewidth}
        \vspace{0.1ex} \multirow{3}{*}{\shortstack[l]{They have changed the \\
        phone menu to try to \\
        deflect us to email, \\
        but you can still get a \\
        live ...}} & ... answer from a female \textcolor{blue}{administratoria llallushoss@rahpx Sandra PJ Jenniea nightiopq HamidF daroyqg S')} ... & -229 & \textcolor{blue}{Nonsense} \\\cline{2-4}
        & ... message or call on line, so I suppose they are just using that as an excuse. Yet they are still telling people to change their telephone number... & -86 & Reasonable \\\cline{2-4}
        & ... link to a phone number here. \textcolor{red}{They have changed the phone menu to try to deflect us to email, but you can still get a live link to}... & -23 & \textcolor{red}{Repetition} \\
        \hline
    \end{tabularx}
    \caption{Examples of sentences at various model likelihoods. Sentences with very low $\log \pmd$ generate  \textcolor{blue}{nonsense}, while sentences that have high likelihood under the model often devolve into extreme \textcolor{red}{repetition}. Nonsense and repetition classifications shown here are only for illustrative purposes. Crowdworkers simply rated sentences for overall quality. See Appendix for more details.}
\end{table*}

Figure~\ref{fig:likelihood-trap} plots these ratings as a function of $\log \pmd$ and confirms that on average the highest quality generations are \emph{not} the most likely. Specifically, we find that response quality is generally positively related with $\log \pmd(x)$ up until an inflection point after which it becomes negatively related.
In our experiments, this inflection point occurs at $\log \pmd(x) = -58.09$.
Our findings suggest that while model likelihoods form a good proxy for response quality, naively maximizing over sentence likelihood leads to suboptimal response quality. We term this phenomenon the \emph{likelihood trap}.

Examples of the likelihood trap can be seen in Table~\ref{tab:samples-table}. Text sequences with extremely high likelihood tend to devolve into either extreme repetition or other nonsense, which some have attributed to either model biases \cite{holtzman2019curious} or aberrations in the training data \cite{ott2018analyzing}. We do not examine the underlying causes of the likelihood trap in this paper.




\subsection{Global Temperature Sampling}
Motivated by our findings that human judgments $HJ$ are positively correlated with model likelihoods for some interval of likelihoods, we investigate whether using $\log \pmd$ as a proxy for $HJ$ would lead to a better decoding algorithm. Specifically, we create a proxy quality function,
\begin{align*}
\hat{Q}(p) = 
\E_{x \sim p}
\left[\!\begin{aligned}
    & \log \pmd(x) ,    & \text{if }\log \pmd < \alpha.\\
    & -\infty,          & \text{otherwise}.
\end{aligned}\right]
\end{align*}
where $\alpha$ is selected as a hyperparameter.

Using globally-normalized temperature sampling, we can then approximate optimizing for $G$ through instead optimizing for the proxy objective $\hat{G}(p) = \hat{Q}(p) + H(p)$. This is due to the following proposition.


\begin{proposition}
Let $p$ be a probability distribution over some finite set $\mathcal{X}$. Let $H$ be the Shannon entropy function. The probability distribution $Q$ which minimizes the reverse KL Divergence $\infdiv{Q}{P}$ subject to $H(Q) = K$ for any achievable constant $K$ has the form,
\begin{align*}
    Q(x) = \frac{P(x)^{1/\tau}}{\sum_{x \in \mathcal{X}} P(x)^{1/\tau}}
\end{align*} 
for some temperature $\tau \in [0, 1]$.
\end{proposition}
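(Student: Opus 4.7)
The plan is to reduce the constrained minimization to a standard variational problem and solve it via Lagrange multipliers. First, I would rewrite
\begin{align*}
\infdiv{Q}{P} = -H(Q) - \sum_{x \in \mathcal{X}} Q(x) \log P(x),
\end{align*}
so that under the constraint $H(Q) = K$, minimizing $\infdiv{Q}{P}$ is equivalent to \emph{maximizing} $\sum_{x} Q(x) \log P(x)$ subject to $H(Q) = K$ and $\sum_x Q(x) = 1$. I would restrict attention to the support of $P$, since otherwise the divergence is infinite and the problem is trivial.

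Next, I would form the Lagrangian
\begin{align*}
\mathcal{L}(Q, \mu, \nu) = \sum_x Q(x) \log P(x) - \mu\bigl(H(Q) - K\bigr) - \nu\Bigl(\sum_x Q(x) - 1\Bigr),
\end{align*}
differentiate with respect to each $Q(x)$ using $\partial H / \partial Q(x) = -\log Q(x) - 1$, and solve the first-order condition $\log P(x) + \mu \log Q(x) + \mu - \nu = 0$ for $Q(x)$. This gives $Q(x) \propto P(x)^{-1/\mu}$; setting $\tau = -\mu$ and renormalizing yields the claimed form $Q(x) = P(x)^{1/\tau} / \sum_{x'} P(x')^{1/\tau}$.

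To pin down the range of $\tau$, I would observe that the entropy of $Q_\tau(x) \propto P(x)^{1/\tau}$ is a continuous function of $\tau$ with $H(Q_1) = H(P)$ and $H(Q_\tau) \to 0$ as $\tau \to 0^+$ (since $Q_\tau$ concentrates on $\arg\max P$). Hence each $K \in [0, H(P)]$ is matched by a unique $\tau \in [0,1]$, which is what ``achievable'' should mean in the statement. The non-negativity constraints $Q(x) \ge 0$ can be ignored because $Q_\tau$ is strictly positive on $\mathrm{supp}(P)$.

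The main obstacle I anticipate is that the entropy level set $\{Q : H(Q) = K\}$ is not convex, so the Lagrange condition by itself only identifies stationary points rather than global minima. To upgrade to a global statement I would either verify that $\infdiv{Q_\tau}{P}$ is monotone in $\tau$ along the exponential family and invoke the strict convexity of $Q \mapsto \infdiv{Q}{P}$, or use a Gibbs-inequality style comparison: for any competing $Q'$ with $H(Q') = K$, write $\infdiv{Q'}{P} - \infdiv{Q_\tau}{P}$ and use the specific form of $Q_\tau$ to show the difference equals $\tau \cdot \infdiv{Q'}{Q_\tau} \ge 0$, with equality only when $Q' = Q_\tau$.
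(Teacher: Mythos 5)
Your derivation follows essentially the same route as the paper's: rewrite $\infdiv{Q}{P}=-H(Q)-\sum_x Q(x)\log P(x)$ so that the problem becomes maximizing cross-entropy $\sum_x Q(x)\log P(x)$ subject to the entropy and normalization constraints, form the Lagrangian, and read off $Q(x)\propto P(x)^{1/\tau}$ from the first-order condition. Where you genuinely go beyond the paper is in the last paragraph: the paper's proof stops at stationarity and disposes of global optimality with the one-line remark that ``positive temperatures give the local maxima and negative temperatures give the local minima,'' which is not an argument, since the feasible set $\{Q: H(Q)=K\}$ is non-convex. Your proposed Gibbs-style comparison does close this gap, and the computation checks out: for any $Q'$ with $H(Q')=K$, substituting $\log P(x)=\tau\left(\log Q_\tau(x)+\log Z_\tau\right)$ gives $\infdiv{Q'}{P}-\infdiv{Q_\tau}{P}=\tau\,\infdiv{Q'}{Q_\tau}\ge 0$ for $\tau\ge 0$, with equality iff $Q'=Q_\tau$. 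One small caveat: your claim that $H(Q_\tau)\to 0$ as $\tau\to 0^+$ assumes $\arg\max P$ is unique (otherwise the limit is $\log m$ for $m$ maximizers), and the restriction $\tau\in[0,1]$ only covers $K\le H(P)$; but these are limitations of the proposition as stated rather than defects of your argument.
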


\begin{proof}
Proof included in Appendix \ref{sec:globalproof}
\end{proof}

When applied to autoregressive models, global temperature sampling is usually dismissed as intractable due to the need to sum over the exponentially large space of all possible sequences in pursuit of the partition function $Z = \sum_x \pmd(x \mid c)^{1/\tau}$. Instead, past work typically decomposes sentences into tokens in a left-to-right autoregressive fashion and then use a local approximation,
\begin{align*}
     \hat{Z} = \prod_{i=1}^n \sum_{x_i} \pmd(x_i | x_{1:i-1}, c)^{\frac{1}{\tau}}
\end{align*}
where models are normalized locally over each set of tokens. This results in the well known (local) temperature sampling algorithm.

Unfortunately, while replacing the global partition function with a series of local ones transforms an exponential problem into a linear one, this approximation may bias the model towards favoring local structure over global structure. Indeed, we show via the following example that for some joint distributions, it is \textit{impossible} to represent globally-normalized temperature sampling via local temperature sampling, even if local temperature sampling is allowed to use a different temperature $\tau$ at each timestep.

\begin{proposition}
There exists a probability distribution $p$ and global temperature $\tau$ such that no choice of parameter allows local temperature sampling to match the joint distribution $p(x)^{1/\tau}$.
\end{proposition}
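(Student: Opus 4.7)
The plan is to exhibit an explicit two-step, binary-alphabet autoregressive model whose globally normalized temperature-sampled distribution cannot be reproduced by any local temperature schedule, even one that picks a different $\tau_i$ at each timestep (and even one that permits $\tau_i$ to depend on the prefix $x_{1:i-1}$). Take $\mathcal{X} = \{0,1\}^2$ with a uniform prior $\pmd(x_1 = 0) = \pmd(x_1 = 1) = 1/2$, a concentrated continuation $\pmd(\cdot \mid x_1 = 0) = (0.9,\, 0.1)$, a uniform continuation $\pmd(\cdot \mid x_1 = 1) = (0.5,\, 0.5)$, and fix the global temperature $\tau = 1/2$.

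First, I would compute the four unnormalized global weights $\pmd(x_{1:2})^{1/\tau}$ and marginalize out $x_2$ to obtain the $x_1$-marginal of $q^*(x) \propto \pmd(x)^{1/\tau}$. Because the prior is uniform, the factor $(1/2)^{1/\tau}$ is common to both branches and cancels, so the marginal is determined entirely by the row-sums $\sum_{x_2} \pmd(x_2 \mid x_1)^{1/\tau}$. At $1/\tau = 2$ these equal $0.81 + 0.01 = 0.82$ for the concentrated branch and $0.25 + 0.25 = 0.50$ for the uniform branch, giving $q^*(x_1 = 0) = 0.82/1.32 \approx 0.621 > 1/2$. The intuition is that the global partition function implicitly rewards prefixes whose continuations are more peaked, an effect a local decoder has no means of anticipating.

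Second, consider any local temperature decoder with first-step temperature $\tau_1$ and any downstream schedule. Because $\pmd(x_1 = 0) = \pmd(x_1 = 1)$, reweighting by any positive power leaves the first-step marginal uniform: $q_L(x_1 = 0) = (1/2)^{1/\tau_1}/\bigl(2 \cdot (1/2)^{1/\tau_1}\bigr) = 1/2$ for every $\tau_1 > 0$. Since the first-step marginal of the local decoder factors out of the joint, this already contradicts $q^*(x_1 = 0) > 1/2$, and no choice of downstream temperatures (prefix-dependent or otherwise) can repair the discrepancy.

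I do not anticipate a genuine obstacle; the construction is essentially the minimal witness to why token-level normalization diverges from sequence-level normalization. The only subtlety worth flagging is that allowing prefix-dependent temperatures at step two might seem to restore flexibility, but because the disagreement is already at the $x_1$-marginal and because $\pmd(x_1)$ is uniform, no such downstream choice can help.
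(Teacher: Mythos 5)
Your proof is correct and takes essentially the same approach as the paper: both construct a two-step distribution with a uniform first-step marginal but continuations of differing peakedness, so that global exponentiation by $1/\tau$ shifts the first-step marginal away from uniform while any local temperature schedule must leave it at $1/2$. The only differences are the specific numbers (the paper uses conditionals $(0.2,0.8)$ vs.\ $(0.5,0.5)$ and obtains marginals $0.5763$ vs.\ $0.4237$) and that you spell out slightly more explicitly why prefix-dependent downstream temperatures cannot repair a first-step discrepancy.
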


\begin{proof}
Figure 2 illustrates one such choice of $p$. By construction, local temperature sampling is forced to set $p_{\text{local}}(A) = p_{\text{local}}(B)$ regardless of the temperature hyperparameter used at that timestep. Setting a global temperature of $\tau = 0.5$ results in
\begin{align*}
&P(A) = \frac{0.1^2 + 0.4^2}{0.1^2 + 0.4^2 + 0.25^2 + 0.25^2} = 0.5763 \\
&P(B) = \frac{0.25^2 + 0.25^2}{0.1^2 + 0.4^2 + 0.25^2 + 0.25^2} = 0.4237
\end{align*}
which is not imitable by any local temperature setting.
\end{proof}

\begin{figure}[h]
    \centering
    \includegraphics[trim=160 100 160 0,clip,width=\columnwidth]{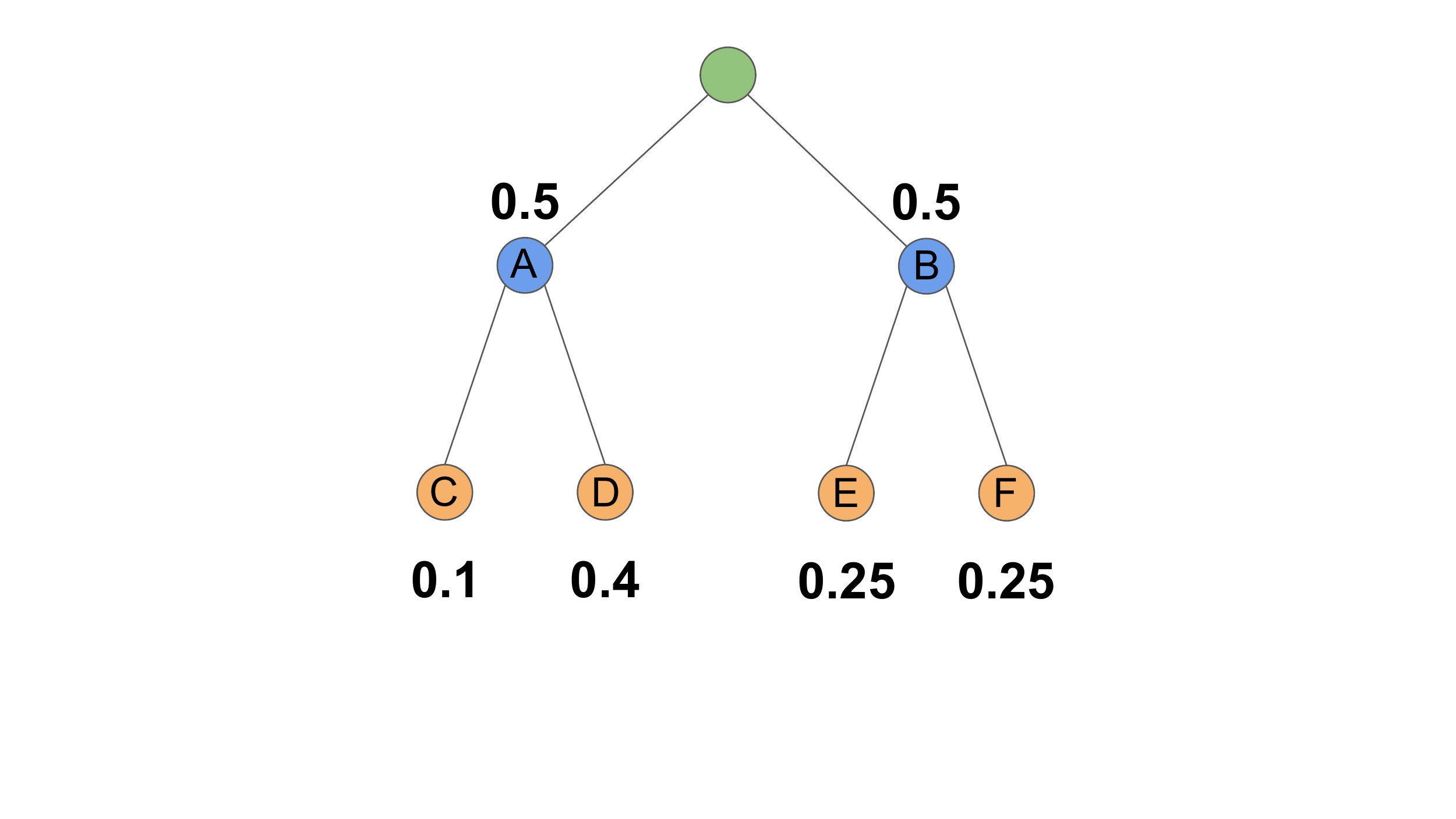}
    \caption{
    Any choice of temperature for local temperature sampling must have $P(A) = P(B)$. However, choosing global temperature $\tau = 0.5$ results in $P(A) = 0.5763$ and $P(B) = 0.4237$ which is impossible for any choice of local temperatures to satisfy.}
    \label{fig:impossibletree}
\end{figure}


Our core insight is that one can sample from the globally-normalized temperature sampling distribution without estimating the partition function $Z$ via rejection sampling. 
Rejection sampling \cite{forsythe1972neumann} gives an algorithm from sampling from an (unnormalized) energy distribution $p_{\text{energy}}$ if there exists a proposal distribution $q$ and constant $M$ such that $Mq \geq p_{\text{energy}}$.

We observe that $\pmd > \pmd^{\frac{1}{\tau}}$ for $\tau \in (0, 1)$ and $0 \leq p \leq 1$. This allows us to use $\pmd$ as the proposal distribution since the unnormalized probabilities of the global temperature sampling are given by $\pgb \propto \pmd^{\frac{1}{\tau}}$. 

\begin{algorithm}[h]
    \caption{Selective Sampling}
    \begin{algorithmic}
        \REQUIRE Global temperature $\tau$, Cutoff $\alpha$, and $\pmd$.
        \STATE Set $M = \alpha^{\frac{1}{\tau} - 1}$
        \WHILE{more sequences are required}
            \STATE Sample a sequence $x$ from $\pmd$.
            \IF{$\log \pmd(x) > \alpha$}
                \STATE Reject sample
            \ELSE
                \STATE Accept with probability $\frac{\pmd(x)^{\frac{1}{\tau} - 1}}{M}$
            \ENDIF
        \ENDWHILE
    \end{algorithmic}
    \label{alg:selective-sampling}
\end{algorithm}


Selective sampling, by design, significantly increases the chances of sampling sequences with large values of $\log \pmd$. 
To avoid falling into the likelihood trap, we propose explicitly discarding generations $x$ where $\log \pmd(x)$ is greater than a chosen hyperparameter $\alpha$. 
An additional positive side effect of the cutoff is that the envelope constant $M$ can be chosen to create a tight bound on $p_{\text{energy}}$, which increases acceptance probabilities by several orders of magnitude.

A priori, it is not obvious how to choose $\alpha$ effectively. 
We propose collecting human judgments for a selection of random samples from $\pmd$ as illustrated in Figure~\ref{fig:likelihood-trap} and setting $\alpha$ equal to the discovered inflection point.
Note, that while this results in our procedure ignoring the set of sentences that individually have the highest probabilities, the total probability mass of this set is quite low: less than 0.5\% in our experiments.

\begin{figure}[h]
    \centering
    \includegraphics[width=0.9\columnwidth]{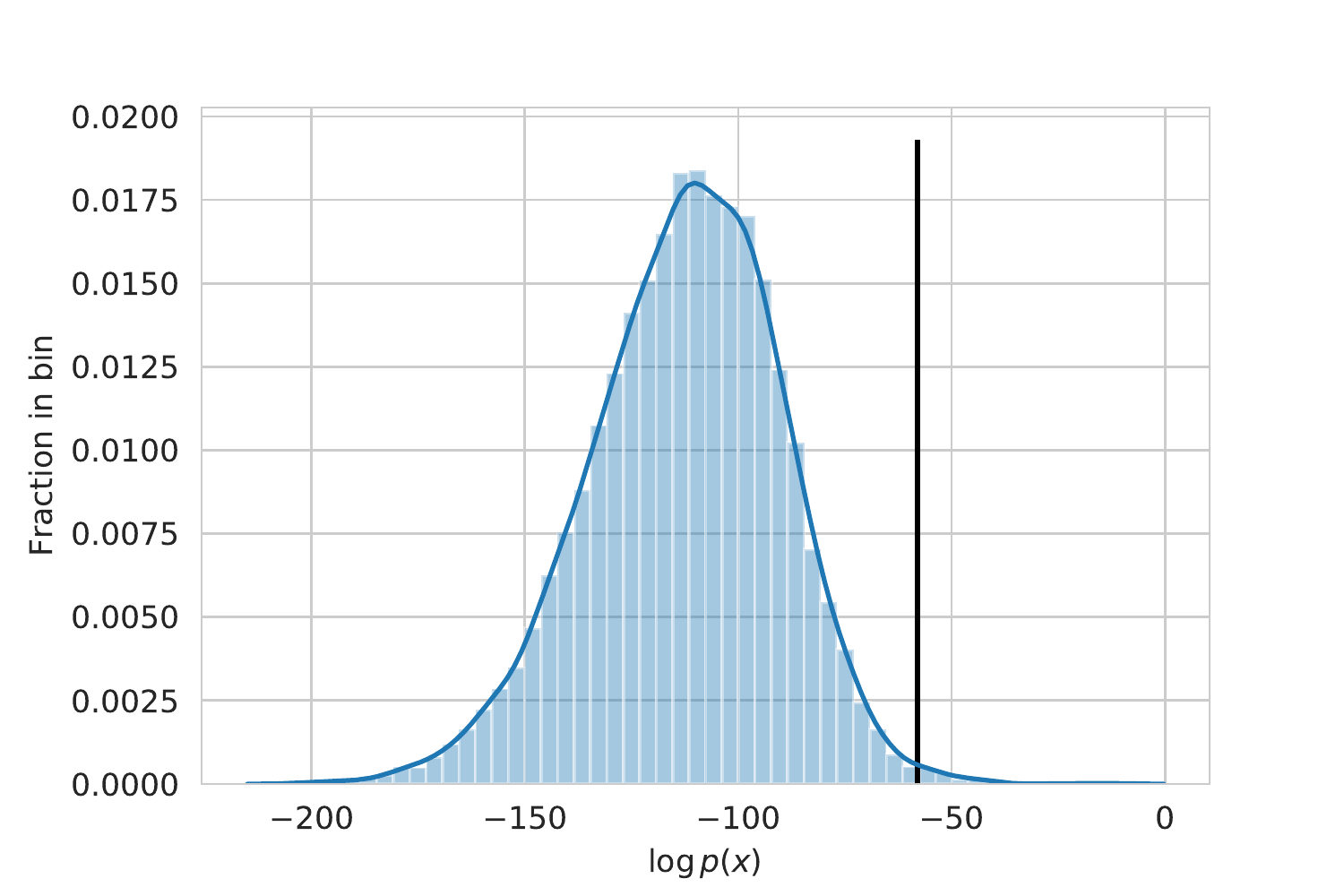}
    \caption{Histogram over $\pmd(x)$ for samples drawn from the same prompt. 99.5\% of samples have log likelihood less than the choosen cutoff $\alpha$ shown in black.}
    \label{fig:likelihood-trap-cutoff}
\end{figure}

%
%
%

\section{Experiments}
\label{sec:experiments}

%
%

In Section~\ref{sec:framework}, we introduce a theoretical framework for comparing decoding algorithms along a quality-diversity curve.
Under this framework, we evaluate several commonly used decoding algorithms in a human study described below.
In addition to selective sampling, we consider the following autoregressive decoding algorithms,
\begin{itemize}
    \item{\textbf{temperature}: Sample tokens with probability proportional to $\pmd(x_i | x_{1:i-1})^{1/t}$. $t$ varies from 0 to 1.}
    \item{\textbf{top-$k$} \cite{fan2018hierarchical}: Sample tokens only from the top-$k$ highest likelihood tokens in the vocabulary at each timestep. $k$ varies from 1 to vocabulary size.}
    \item{\textbf{top-$p$} (also known as nucleus sampling) \cite{holtzman2019curious}: Sample only from tokens comprising the top-$p$ percent of probability mass at each timestep, as ordered from most to least likely. $p$ varies from 0 to 1.}
\end{itemize}
At the extremes of their hyperparameter ranges, these algorithms all converge to greedy decoding and random sampling, respectively.
To sweep across the quality-diversity curve, we consider several hyperparameter settings per decoding algorithm below.
We refer to each decoding algorithm-hyperparameter combination as a `decoding configuration'.

\subsection{Setup}

We apply each decoding algorithm to the 774M parameter variant of GPT-2 \cite{radford2019language}, a publicly-released language model.
To ground samples in a common context, we select a set of 48 examples from the GPT-2 test set to condition upon.
As samples are evaluated by human raters, we filter out examples containing explicit content or web markup.
Samples are drawn by conditioning on a `prompt' consisting of the first 20 space-delimited words of a test example.
As sample quality becomes ambiguous when samples are terse \cite{ippolito2019human}, we explicitly require all sampling methods to generate exactly 30 tokens, a length approximately equal to the prompt.

To estimate the expected Human judgment score $\E_{p}[HJ(x)]$ of the probability distributions induced by each decoding algorithm, we enlist a qualified pool of 146 Amazon Mechanical Turk (AMT) workers selected by satisfactory performance on a qualification task. 
Workers are presented sets of five samples, each conditioned on the same prompt and drawn from five different algorithm-hyperparameter configurations and asked to assign qualitative scores to each sample ranging from human-like to gibberish. The exact prompts, as shown to crowdworkers, are included in the Appendix.

Prior work has found that human annotaters have significant trouble in directly separating out machine and human generated responses when they are of similar quality, as the task of assessing sentence quality is highly subjective  \citep{ippolito2019human}. We found that constructing pairwise preference ratings by randomly pairing samples evaluated at the same time significantly reduced the variance of our results.
Specifically, if one sample is rated higher than the other, one is assigned a score of +1 and the other -1.
If both are rated equally, both are assigned a score of 0.
The score assigned to a decoding configuration is its average score across all pairwise preference ratings.
The average scores for each decoding strategy setting we experimented with are shown in Figure \ref{fig:mimic}.

\subsection{Results}
We now introduce the first large-scale study comparing decoding algorithms and their hyperparameters.
Unlike all prior work \cite{holtzman2019curious,ippolito2019comparison}, we explicitly put decoding algorithms \emph{on equal footing} by comparing sample quality at equal points of diversity.
We consider five hyperparameter configurations per decoding algorithm for a total of twenty decoding algorithm-hyperparameter configurations.
For each configuration and prompt, we draw ten samples.
In total, workers rate nearly 10,000 samples resulting in over 38,000 paired ratings.
\begin{figure}[h]
    \centering
    \includegraphics[width=0.9\columnwidth]{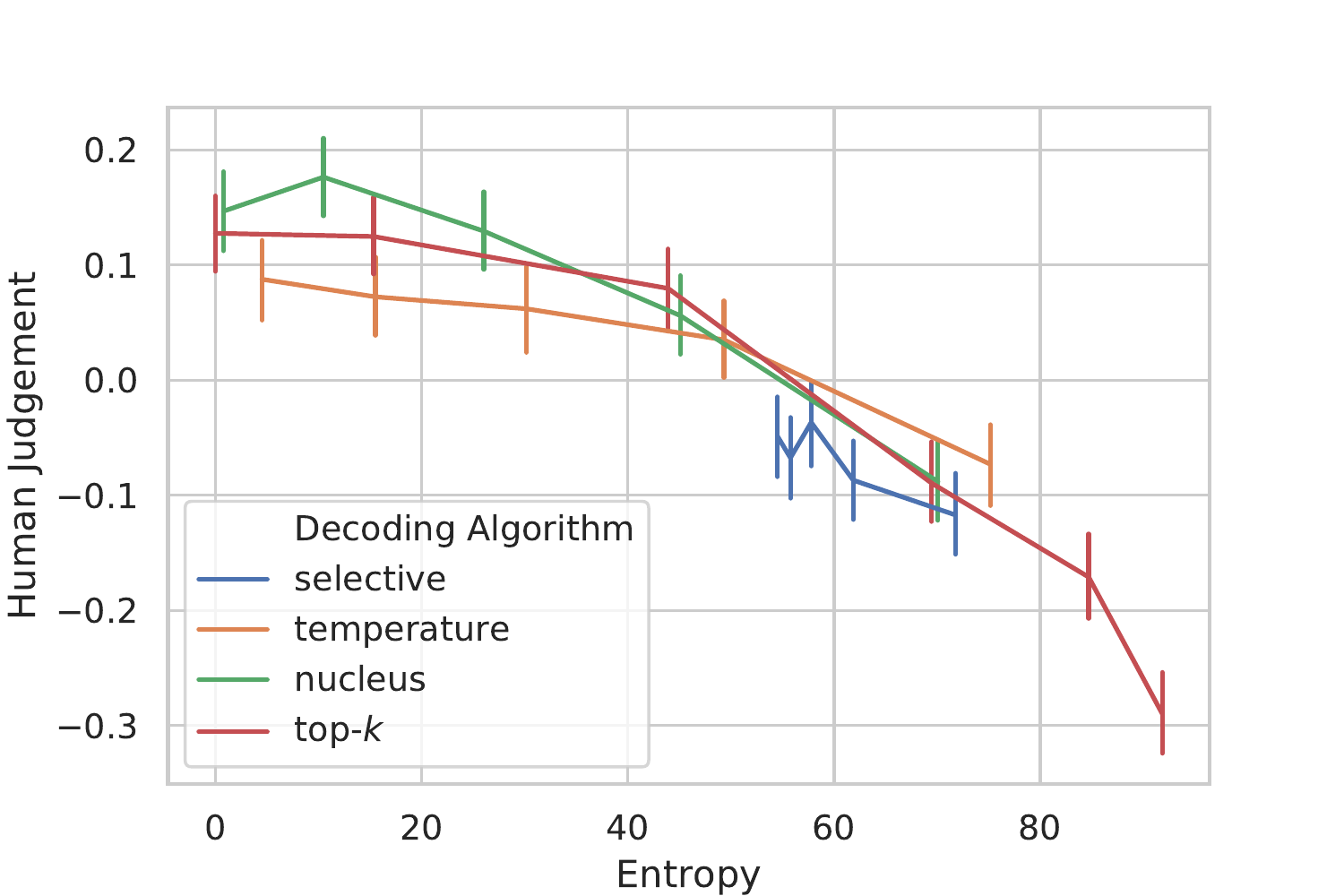}
    \caption{Human judgment scores as a function of decoding algorithm's entropy. Each point represents a single choice of decoding algorithm and hyperparameter. Error bars represent 95\% bootstrap confidence intervals.}
    \label{fig:entropy-vs-hj}
\end{figure}
\begin{figure*}[t]
    \centering
    \includegraphics[width=0.9\textwidth]{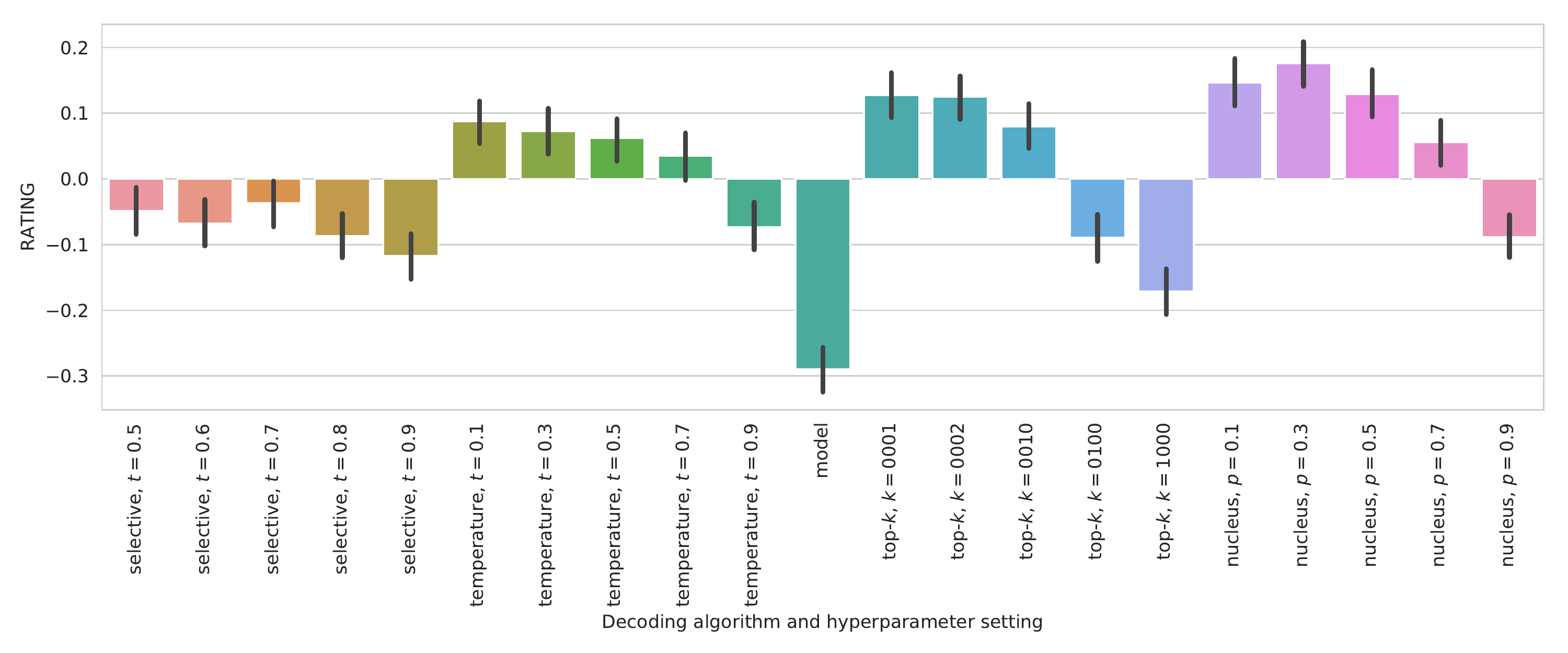}
    \caption{Human judgment scores for each decoding algorithm and hyperparameter choice. "Selective" is selective sampling and "model" is sampling directly from the probability distribution outputted by the language model. A score of 0 represents no preference. Selective sampling underperforms other more computationally efficient strategies.}
    \label{fig:sampling-method-vs-hj}
\end{figure*}

Our main results are summarized in Figures~\ref{fig:entropy-vs-hj} and \ref{fig:sampling-method-vs-hj}.
We empirically estimate the entropy of the probability distribution induced by each decoding configuration .
Reassuringly, both entropy and human judgment scores vary smoothly with decoding algorithm hyperparameter.

As expected, random sampling directly from the model $\pmd(x)$ is simultaneously the highest entropy \emph{and the lowest quality}.
This is empirically consistent with the long-standing intuition that decoding algorithms are critical to improving sample quality.
Why are samples from random sampling such poor quality?
Language models such as GPT-2 are trained to minimize the KL-divergence between a training set and the model distribution $\pmd$, an objective that prioritizes recall over precision \cite{arjovsky2017wasserstein}.
As a result, models tend to ensure that high quality sequences have high likelihood without insisting that all high likelihood sequences also have high quality.
When we evaluate samples from the model, we evaluate the latter condition.

Our second conclusion is that sample quality varies significantly with entropy for all decoding algorithms.
Moreover, when aligned on entropy, sample quality between all autoregressive decoding algorithms is comparable across a wide range.
It is only when entropy is low -- when decoding algorithms heavily influence sampling -- that sample quality between algorithms diverge.
In this regime, we find that nucleus sampling outperforms top-$k$, which in turn outperforms temperature sampling.
Observing such a difference should be unsurprising: the entropy of a distribution alone does not determine its sample quality.
We conclude that a fair comparison of decoding algorithms must not only compare at the same level of entropy but at a \emph{range} of entropy levels.


Finally and most surprisingly, we find that, in spite of its theoretical appeal, selective sampling consistently underperforms all other decoding algorithms considered.

\subsection{Selective Sampling}
Why does selective sampling underperform?
Our error analysis yields at least two potential causes: priors induced by decoding algorithms and a context-dependent likelihood trap.
\begin{figure}[h]
    \centering
    \includegraphics[width=0.7\columnwidth]{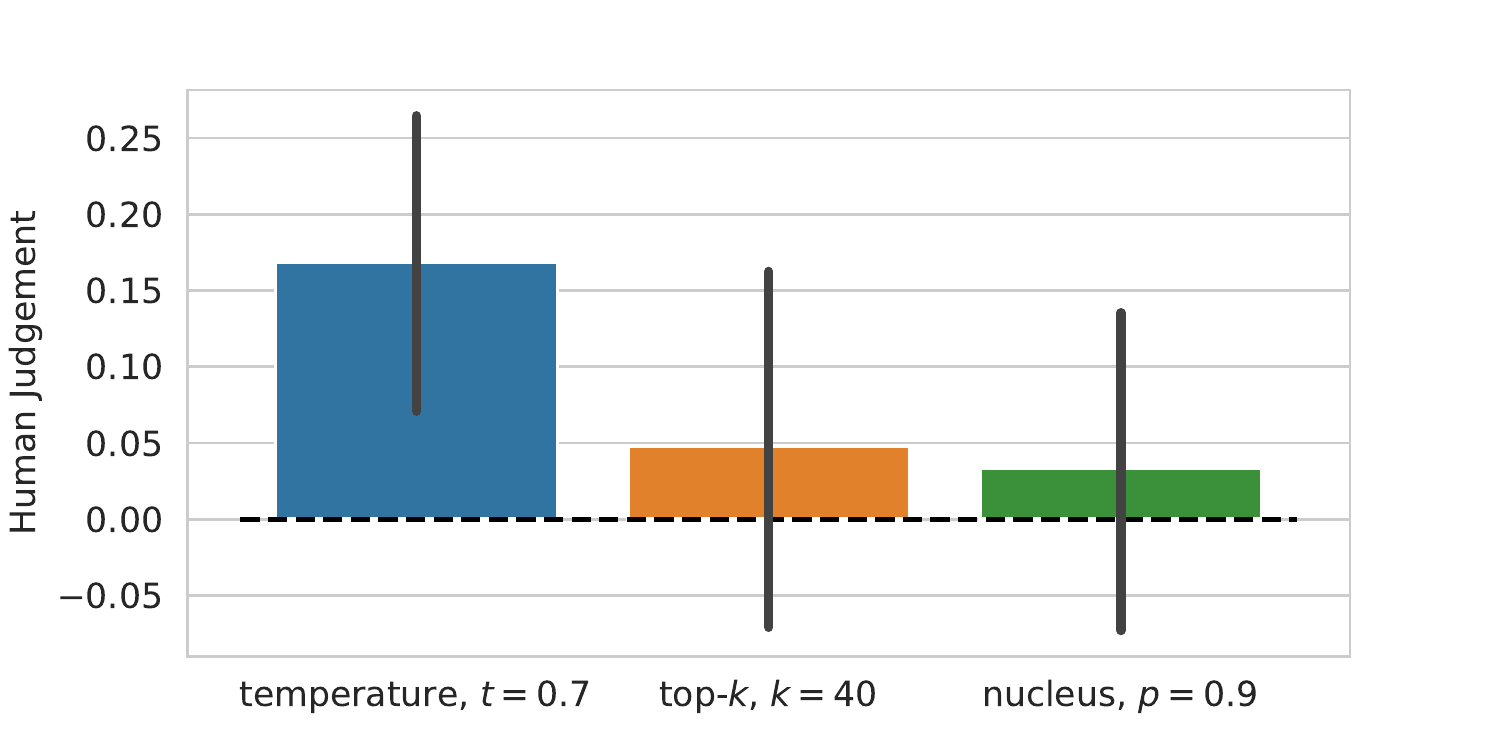}
    \caption{Human judgment scores for paired samples of equal log likelihood. 95\% bootstrap confidence intervals. Dotted line represents random judgments. For all decoding strategies it can be seen that decreasing diversity tends to lead to higher-judged outputs.}
    \label{fig:mimic}
\end{figure}
We first consider the implicit priors of autoregressive decoding algorithms.
Autoregressive decoding algorithms naturally favor sequences $x$ where each token $x_i$ has high model likelihood with respect to its conditional distribution $\pmd(x_i | x_{1:i-1})$.
Note that this is not necessarily the same as favoring \emph{all} high-likelihood sequences with high joint likelihood $\pmd(x)$; a criteria selective sampling targets at low temperatures.
We hypothesize that autoregressive decoding algorithms are inducing additional structure beyond high joint likelihood.

To test this hypothesis, we construct a human rating experiment that pairs random samples from a decoding algorithm with another random samples from the model distribution $\pmd$ such that the two samples have the \emph{same joint sentence likelihoods}.
In this way, we are able to control for differences in the distribution of $\pmd$ that different decoders induce and explicitly test only how various decoding algorithms promote different sequences with the same overall joint likelihood.
We draw samples from three commonly-used decoding configurations conditioned on all 48 prompts and compare each against random sampling by ask crowdworkers to rate which of the paired responses is of higher quality.

In Figure~\ref{fig:mimic}, we see that temperature sampling with $t=0.7$ is undeniably preferred to otherwise equivalent samples drawn directly from $\pmd$, though for other decoding configurations, the difference is currently less clear.
Selective sampling, a method with proposals drawn from $\pmd$, does not share this prior of its autoregressive locally normalized decoding counterparts.
We can thus conclude that the success of a decoding algorithm involves more than promoting high joint likelihood; in this way, selective sampling is deficient.
\begin{figure}[h]
    \centering
    \includegraphics[width=0.8\columnwidth]{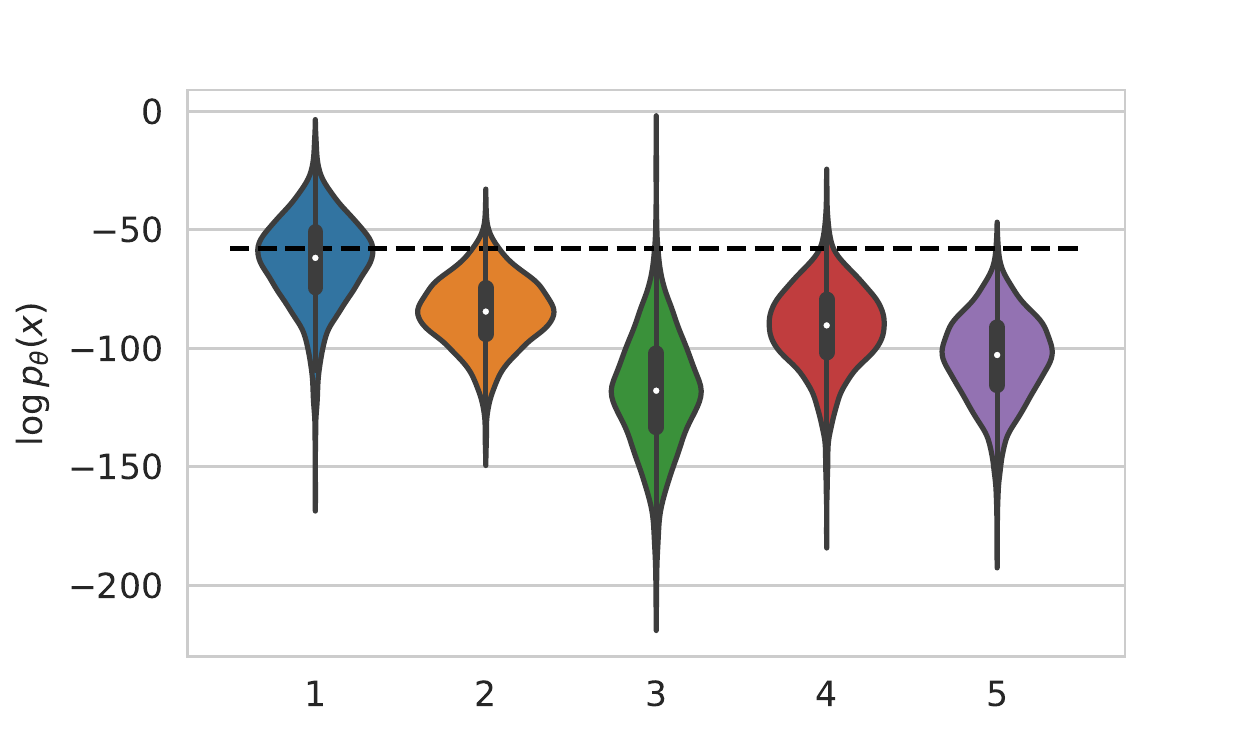}
    \caption{Distribution of prompt-conditional sample log likelihoods for five different prompts. The dotted-line represents the cutoff $\alpha$ used in experiments.}
    \label{fig:global-temp-logp-distribution}
\end{figure}

Second, we consider the distribution over sample log likelihoods conditioned on a fixed prompt as show in Figure~\ref{fig:global-temp-logp-distribution}
Depending on the prompt, the distribution over log likelihoods varies from prompt to prompt.
In selective sampling, we've elected to choose a single, global maximum likelihood constant $\alpha$.
For some prompts, this has nearly no impact -- nearly all samples have likelihood below the cutoff.
For others, this may eliminate nearly half of samples, leaving only those of lower quality. This suggests that a fixed cutoff $\alpha$ for all prompts may not be ideal.

Based on the prior experiments, we find that choice of decoding algorithm and its hyperparameter has a significant impact on sample quality and diversity.
Further, we find that sample quality and diversity can be traded for one another, and that the merit of a decoding algorithm requires comparing it to others at equivalent levels of diversity.
We also given evidence that autoregressive decoding algorithms induce additional preference beyond promoting samples with high joint likelihood; a beneficial preference selective sampling does not share.

\section{Related Work}
\textbf{Encouraging Diversity}\quad
Several recent work have proposed strategies for increasing the diversity of text generated by language models.
These approaches fall into two broad categories: (1) algorithmic changes to the decoding algorithm and (2) methods that involve training auxiliary language models or modifying the training paradigm for the main language model in some way.

The advantage of changing the decoding algorithm is that improvements can be rapidly be implemented on top of any already trained language model.
\citet{vijayakumar2016diverse},
\citet{li2016mutual},
\citet{tam2019clustered}, and
\citet{kulikov2018importance} all propose modifications to beam search to force it to explore a more diverse set of beams.
In contrast, modifications to random sampling that have been proposed aim to reduce diversity and thereby increase quality \citep{fan2018hierarchical,holtzman2019curious}.
\citet{ippolito2019comparison} compare many of these algorithmic advancements on the tasks of open-ended dialog and image captioning, concluding that the quality-diversity tradeoff makes it nearly impossible to say that any one of these methods is ubiquitously best.

We choose to evaluate three commonly used decoding methods: nucleus sampling \citep{holtzman2019curious}, top-k sampling \citep{fan2018hierarchical}, and temperature sampling. All three of these methods control the relative tradeoff between quality and diversity with a single hyperparameter. Top-k sampling samples from only the top-k most likely tokens at a timestep, proportionally according to the original probability. Nucleus sampling (also called top-p) sampling operates similarly, but chooses an adaptive $k$ such that the top-k tokens comprise of the top-p percent of the total probability mass at each timestep. Temperature sampling divides the logits of each token by the temperature hyperparameter before normalizing and converting the logits into sampling probabilities. 
In terms of diversity-promoting approaches that require training new language models, \cite{li2016diversity} use a language model that predicts the source sequence given the target sequence to rank candidate generations, penalizing generations that are too generic (have low $P(\text{source} \mid \text{target})$).
\citet{welleck2019neural} propose a novel loss function which discourages the model from assigning too high probability to repetitive wording. 
\citet{zhang2018generating} and \citet{xu2018diversity} use adversarial learning methods to encourage diversity.
Though these methods are promising, the extra complexity of training makes them less attractive for quickly improving upon existing language models.

The concept of oversampling generations and then ranking them has been popular since the days of statistical machine translation \citep{shen2004discriminative} but has also been used more recently in other domains \cite{li2016diversity,ippolito2019comparison,kriz2019complexity}.
Our particular contribution is to relate our sampling algorithm to the reverse KL divergence and competing objectives maximization.
We are also able to use this method to give approximate probability density estimates for sampled sentences, which typically cannot be done for algorithms that oversample generations.

\textbf{Likelihood Trap}\quad
We are far from the first to observe evidence of the likelihood trap. In particular, the machine translation and image captioning communities have long known that using higher beam sizes often leads to lower BLEU scores \citep{cohen2018unconstrained,vinyals2016show,yang2018breaking}.
In open-ended generation, \citet{holtzman2019curious} find similar results, observing that maximizing the likelihood generates extremely repetitive sentences. In addition to finding corroborating evidence that low quality generations appear at both the low and high probability extremes, our main contribution towards understanding the likelihood trap is the first explicit measurement of the relationship between model likelihoods and human quality judgments at all points in the model probability space, not just the endpoints.

\citet{ott2018analyzing} attempt to quantify the reasons behind the likelihood trap, proposing that the underlying issue is low quality examples in the training data.
They demonstrate that the likelihood trap can be avoided when restricting themselves to a significantly smaller dataset where each training point is carefully examined to guarantee that it is high quality.
However, given the recent interest in training increasingly large language models on increasing large datasets, it seems infeasible to guarantee the quality of every example included in the dataset.


\textbf{Frameworks}\quad
Note that our framework is related, but not identical to many frameworks such as  \citet{hashimoto2019unifying, kingma2013auto, goodfellow2014gan} which ask that generative models mimic the training distribution exactly. 
While some tasks do require indistinguishability as the ultimate goal (e.g. representation learning \cite{bengio2013representation}, Turing Test \cite{turing2009computing, ippolito2019human}, etc.), this is typically not the case for most generation tasks. 
Humans make errors, but a ``perfect'' model would not seek to imitate these mistakes.
Because we ground quality evaluations in human judgments rather than on any statistical measure, our framework is easily able to capture the possibility of superhuman performance in ways that frameworks based solely on a statistical divergence would find difficult.

\section{Conclusion}
In this paper, we propose a framework for credibly evaluating decoding algorithms and use it to conduct the first large scale evaluation of decoding algorithms by measuring their performance along the entire quality-diversity frontier. 
Our findings suggest that existing decoding algorithms are more or less interchangeable in high diversity settings, but that nucleus sampling performs best when quality is valued over diversity.
Additionally, we provide evidence for the existence of a \emph{likelihood trap} and are the first to explicitly measure the relationship between $\log \pmd$ and human judgments. Finally, we propose and evaluate selective sampling, the first algorithm that can tractably estimate \emph{globally} normalized temperature sampling.

In the future, we hope to extend our work to additional generative language models as well as other modalities such as image and music generation. Additionally, we leave questions of whether selective sampling can be improved via choice of an adaptive cutoff that can vary based on the prompt or proposal distributions other than random sampling for future discovery.







\bibliography{paper}
\bibliographystyle{icml2020}
\clearpage
\appendix
\section{Appendix}
\label{sec:appendix}

\subsection{Proof of Proposition 1}
\label{sec:globalproof}

Notice first that, subject to $H(Q) = K$,
\[
 \arg\min_Q \infdiv{Q}{P} = \arg\max_Q \sum_{x \in X} Q(x) \log P(x)
\]

Properly choosing $K^*$ allows us to write the Lagrangian dual for the above constrained optimization problem as

\begin{align}
L(Q, \lambda, \mu) & = \lambda (\sum_{x\in \mathcal{X}} Q(x) \log P(x)) - K^*) \\
& + H(Q) + \mu ((\sum_{x\in \mathcal{X}} Q(X)) - 1) = 0
\end{align}

For any $x \in \mathcal{X}$
\begin{align*}
& \nabla_{Q(x)} L(Q, \lambda, \mu) \\
= \: & \lambda \log P(x) + \log Q(x) + 1 + \mu = 0 \\
\Rightarrow \; & Q(x) = \frac{P(x)^{-\lambda}}{e^{1 + \mu}}
\end{align*}

Setting $\lambda = -\frac{1}{\tau}$ and $\mu = -1 + \log \sum_{x \in \mathcal{X}} P(x)^{\frac{1}{\tau}}$
immediately gives us temperature sampling. Finally, observing that positive temperatures give us the local maxima and negative temperatures give us the local minima completes the proof.

\subsection{Experimental Design}
\label{sec:experimental-design}

In this section, we describe the design of experiments presented in Section~\ref{sec:experiments} in greater detail.

We begin by describing the task presented to crowdsourced raters.
A sample task is shown in Figure~\ref{fig:example-task}.
Each task consists of a ``context'' sequence of the first 20 words in a news article.\footnote{News articles are sourced from GPT-2's WebText dataset. https://github.com/openai/gpt-2-output-dataset}
We then present the rater with five continuations of 30 word-piece tokens.
The rater assigns a label of ``High Quality'', ``Decent'', ``Passable'', ``Bad'' or ``Terrible'' to each.
We note that these labels are inherently subjective, and include a description and reference example before each task to calibrate the rater.
The same description and example is repeated in Figure~\ref{fig:task-instructions}.

\begin{figure*}[h]
    \centering
    \includegraphics[width=\textwidth]{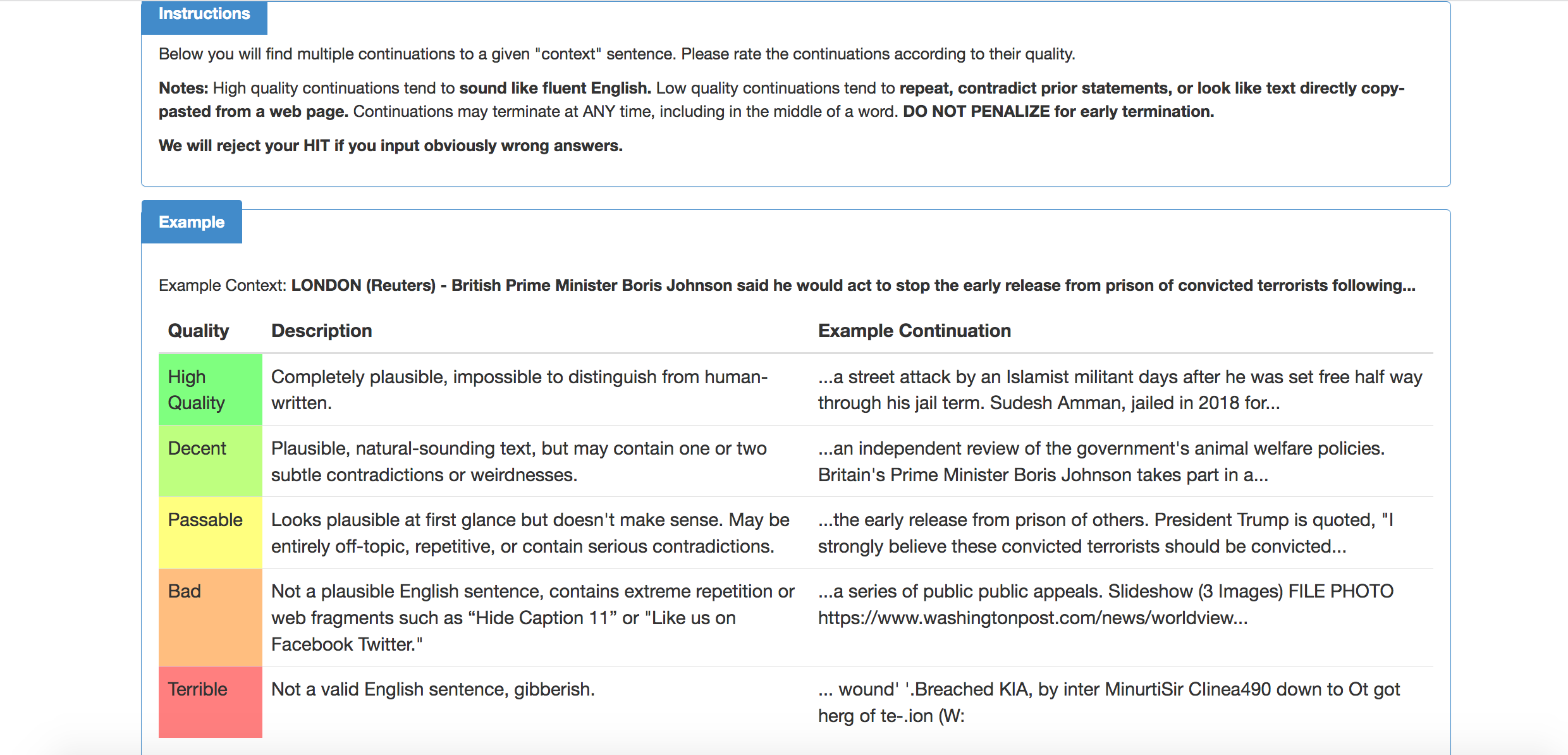}
    \caption{Instructions for the crowdworker task. Each sentence continuation is labeled on a scale from ``Terrible'' to ``High Quality''. A description of each label and an example continuation that fits each each is provided before each task.}
    \label{fig:task-instructions}
\end{figure*}

\begin{figure*}[h]
    \centering
    \includegraphics[width=\textwidth]{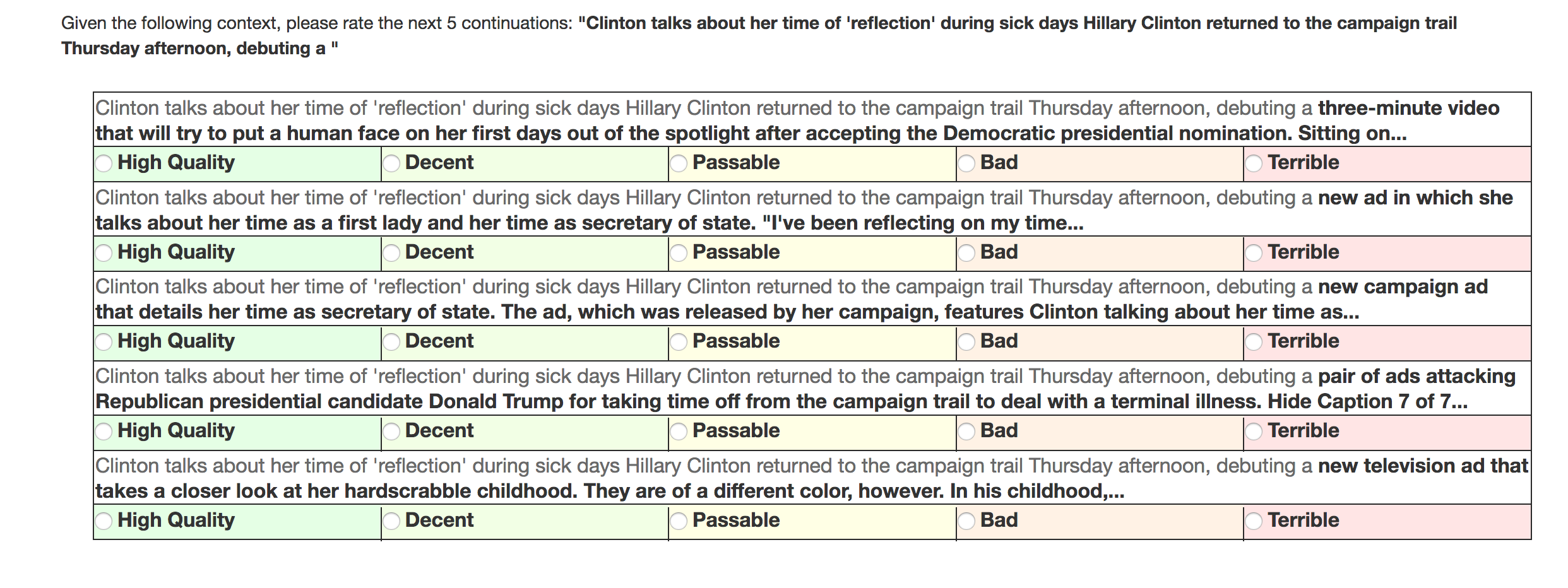}
    \caption{Sample crowdworker task used for the main evaluation results. Raters assign a label on a scale from ``Terrible'' to ``High Quality'' to each of five continuations sharing a common context of twenty words. Each continuation is generated by a different sampling method and hyperparameter.}
    \label{fig:example-task}
\end{figure*}

In preliminary experiments, we found examples and instructions insufficient for achieving repeatable results.
Manual inspection of rater responses revealed a failure to interpret the labels correctly as well as spammers who would always choose the same response for every prompt.
As a result, we crafted a qualification exam of five continuations.
Only raters which rated all five continuations correctly or nearly correctly\footnote{Raters which incorrectly labeled at most one continuation with a label at most one level off (e.g. if the correct answer is "Bad", acceptable errors are "Passable" and "Terrible") are counted as "nearly correct".} were allowed to participate in further experiments.
Of the 550 crowdsourced workers surveyed, 136 met this criteria.
We refer to this set of raters as the "qualified rater pool" below.

Even with a qualification exam, we found raters often disagree on the appropriate label for a given continuation.
However, when asked to choose which of two continuations was higher quality quality (if any), raters were better aligned.
With this in mind, we choose to analyze \textit{pairs} of ratings given in the same task.
From five absolute ratings, we construct twenty pairwise preference ratings: two per pair of continuations.
If two continuations receive the same label, they are assigned a preference of 0.
If the first continuation is rated higher than the second, a the pair (first, second) is assigned a score of +1 and the pair (second, first) a score of -1.
All analyses comparing multiple decoding methods use this methodology.

Even with the precautions above, care is needed to ensure repeatable results.
To measure this, we performed an ``A/A'' experiment prior to data collection.
This experiment consists of having the same tasks rated by two different pools of raters.
Identical analyses are performed on both rating results, and the experimental setup is deemed valid if conclusions are consistent.
To achieve this, we constructed 150 tasks\footnote{The large-scale experiment includes 1,930 tasks.} using a subset of the context sequences and decoding methods from our primary experiment.
We artificially split the qualified worker pool in two by sending the same tasks for evaluation at midnight and at noon.\footnote{All tasks within each experiment were rated within 4 hours and 1.5 hours, respectively.}
We submit the same set of tasks to both rater pools.
An analysis of results from both sets of ratings (Figure~\ref{fig:sampling-method-vs-hj-consistency}) reveals a statistically consistent preference of top-$p$ over top-$k$ and (local) temperature sampling, and a severe disapproval of random sampling from the model.
These results are also consistent with the same statistics gathered in the full-scale experiment presented in the main text and another experiment described below.

\begin{figure*}[h]
    \centering
    \begin{subfigure}{.20\textwidth}
        \includegraphics[width=\textwidth]{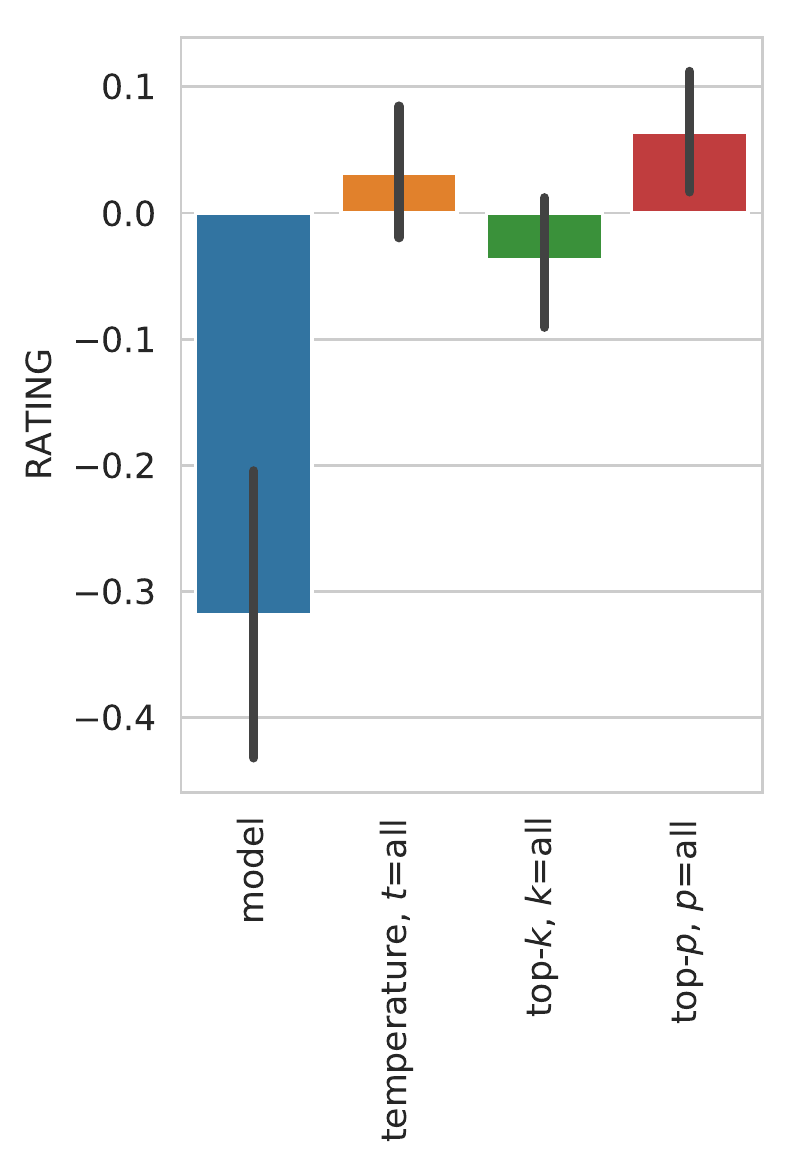}
        \caption{A/A, midnight}
    \end{subfigure}
    \begin{subfigure}{.20\textwidth}
        \includegraphics[width=\textwidth]{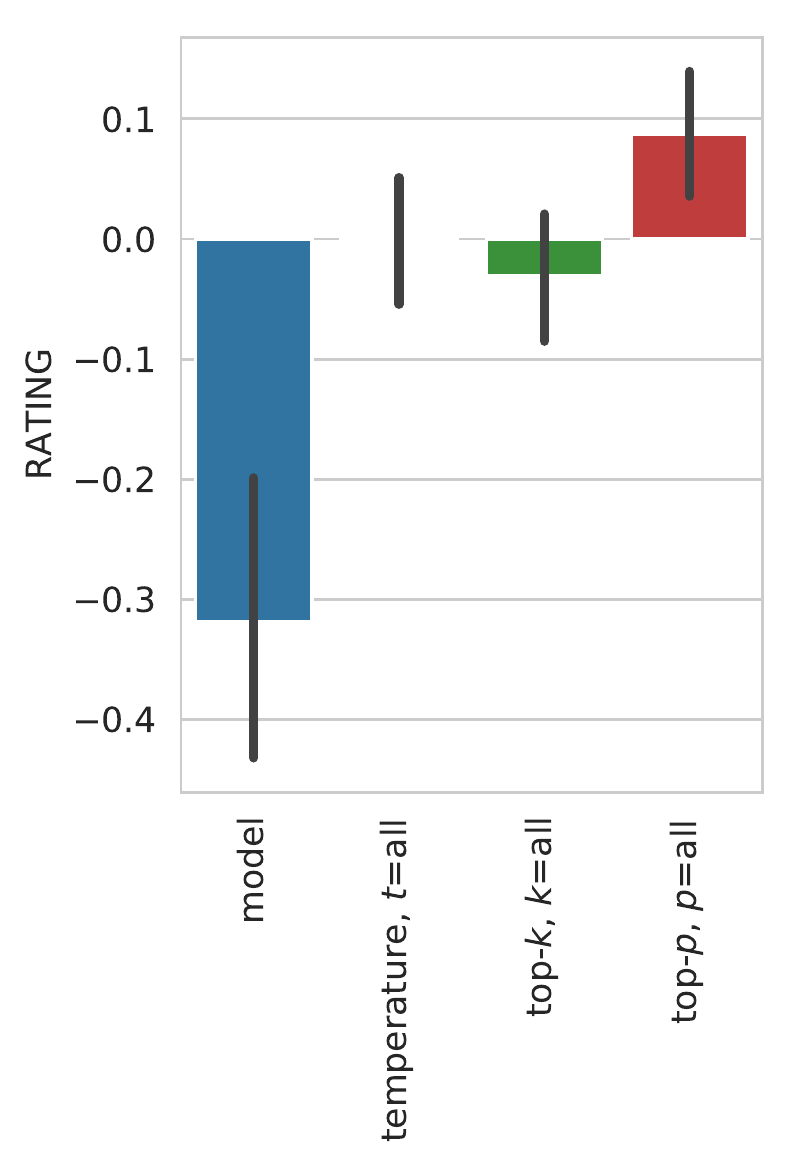}
        \caption{A/A, noon}
    \end{subfigure}
    \begin{subfigure}{.20\textwidth}
        \includegraphics[width=\textwidth]{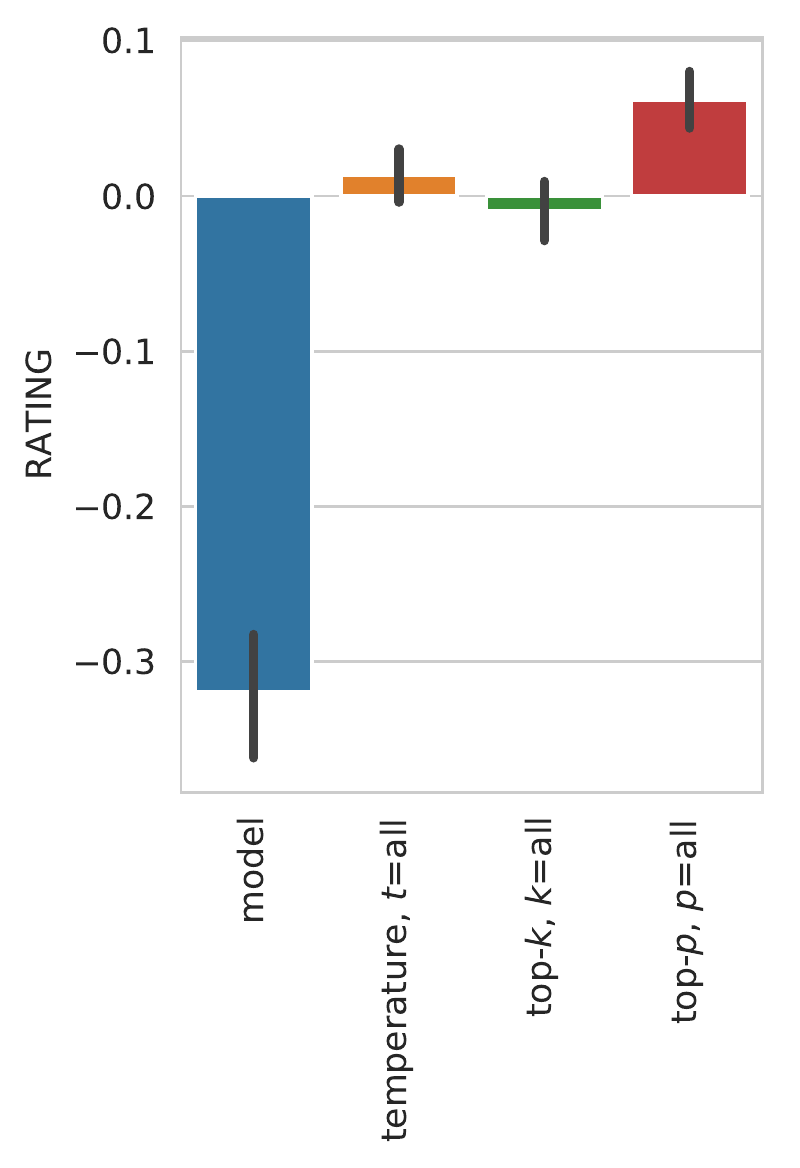}
        \caption{Full-scale}
    \end{subfigure}
    \begin{subfigure}{.20\textwidth}
        \includegraphics[width=\textwidth]{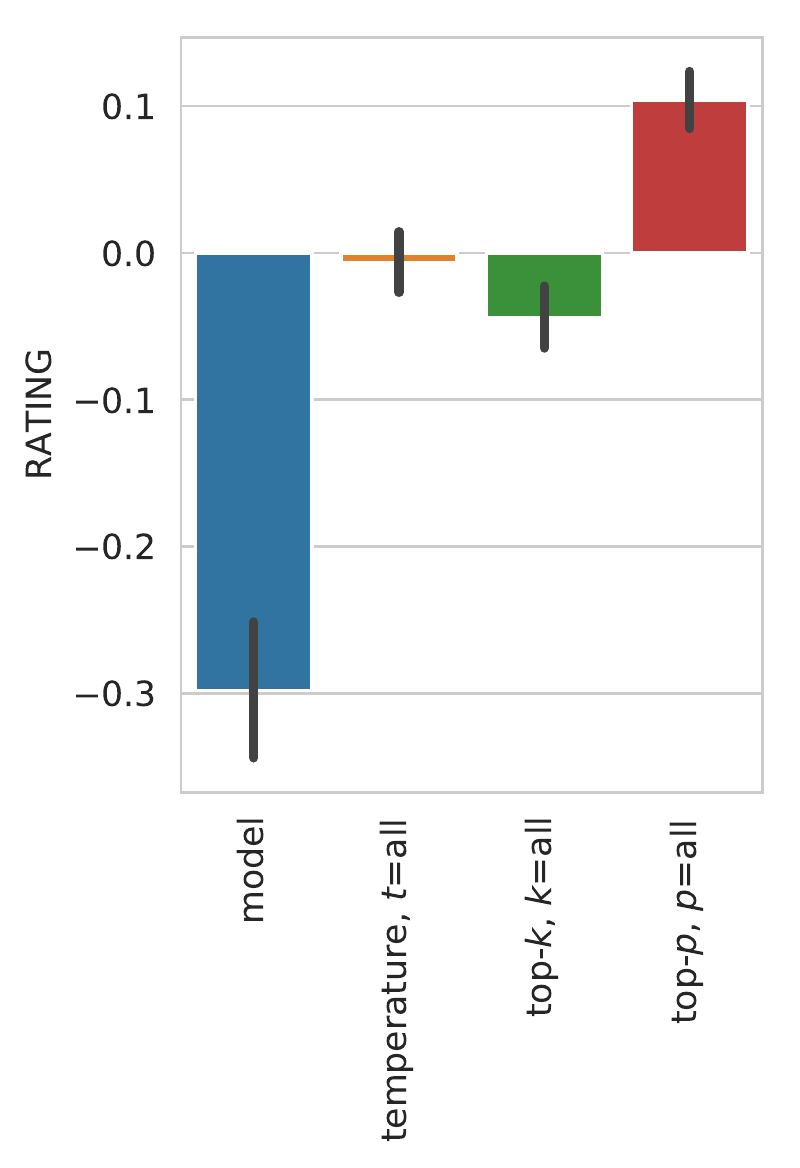}
        \caption{Inter-rater}
    \end{subfigure}
    \caption{Average Human judgement scores for each sampling method, aggregated across sampling method hyperparameters. In spite of being collected by different raters on different sets of tasks and different points in time, rater preference remains consistent.}
    \label{fig:sampling-method-vs-hj-consistency}
\end{figure*}

To further validate the reliability of our methodology, we explicitly measure inter-rater agreement on the same set of 150 tasks in a follow-up experiment after large-scale data collection.
In this experiment, we ask each task be rated by five distinct raters.
We measure Fleiss's Kappa, a measure inter-rater agreement, on the resulting pairwise ratings.
We obtain a score of 0.1964 -- an indication that a correlation between raters exists but that the task is far from unambiguous.
While this may initially appear concerning, we argue that this is an indication of the task's difficulty.
Unlike image classification, for example, a universally agreeable criteria for text quality does not exist.
A measure of Cohen's Kappa on the A/A experiment above produces a score of 0.19578 -- nearly identical to the inter-rater agreement experiment described here.
The similarity of these two statistics gives evidence that the proposed experimental design is repeatable in spite of the task's ambiguity.
These reuslts underscore the importance of large-scale, repeatable studies like that presented here.

\begin{figure*}[h]
    \begin{center}
    \begin{tabular}{|c|c|c|}
        \hline
        Experiment & Num Ratings & Kappa \\
        \hline
        A/A & 2,968 & 0.1957 (Cohen's) \\
        Five-Rater & 14,760 & 0.1964 (Fleiss's) \\
        \hline
    \end{tabular}
    \caption{Inter-rater agreement between pairwise preference ratings as measured in a preliminary A/A experiment and an explicit, five-raters-per-task inter-rater agreement experiment. While agreement is low, Kappa is strongly consistent between both experiments.}
    \label{tab:inter-rater-agreement}
    \end{center}
\end{figure*}

We conclude by measuring rater preference between each pair of sampling method and hyperparameter on the five-raters-per-task inter-rater agreement experiment described above.
Results, as shown in Figure~\ref{fig:sampling-config-vs-hj-consistency}, indicate that the same trends presented in the full-scale experiment (Figure~\ref{fig:sampling-method-vs-hj}) hold,
\begin{itemize}
  \item Top-$p$ is preferred to all other sampling methods,
  \item Increased diversity correlates with lower human judgement scores, and
  \item Random sampling directly from the model produces the lowest human judgement scores by a large margin
\end{itemize}

\begin{figure*}[h]
    \centering
    \includegraphics[width=0.8\textwidth]{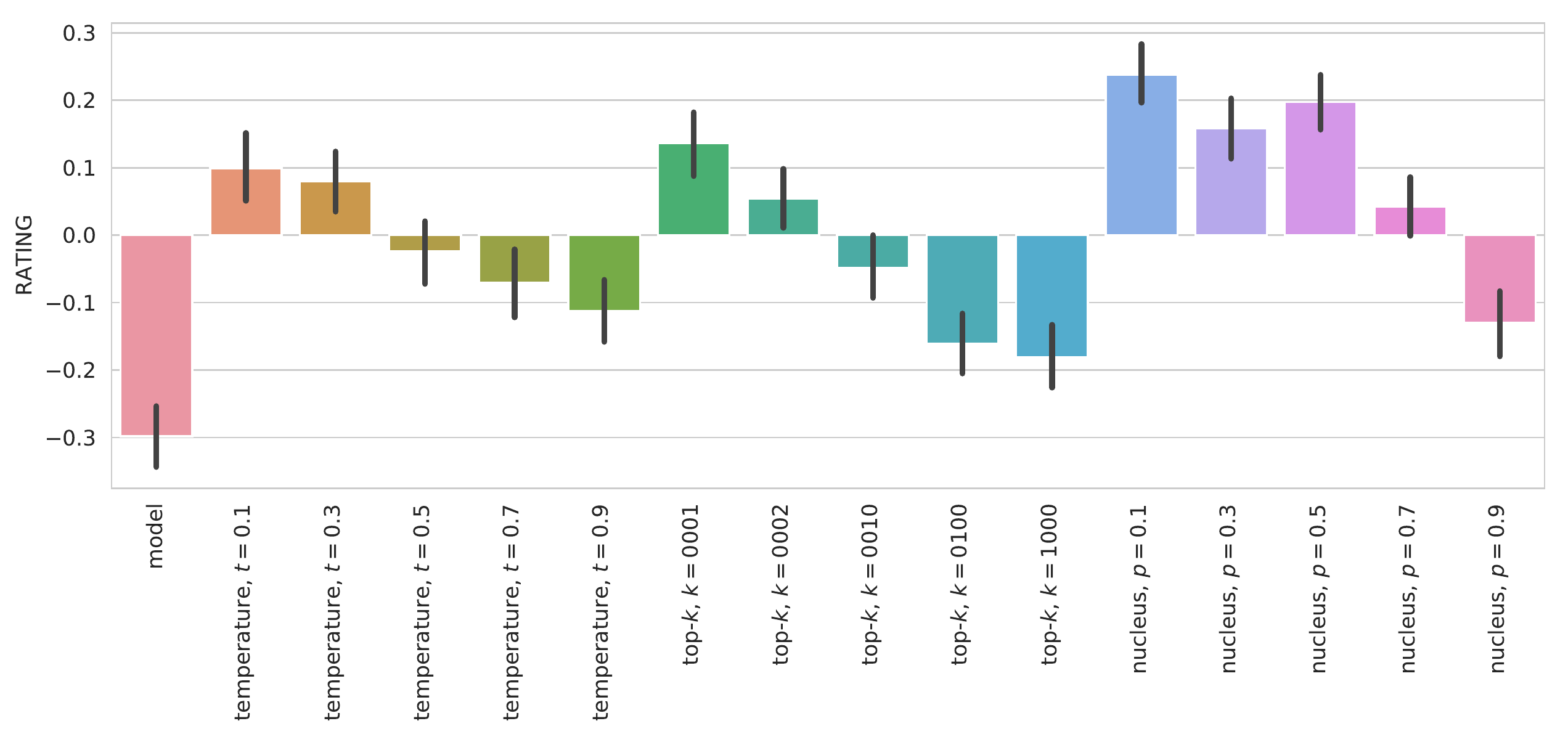}
    \caption{Human judgement scores for each decoding algorithm and hyperparameter choice, as measured in the inter-rater agreement experiment. Preference between sampling methods remains consistent with large-scale experiment shown in Figure~\ref{fig:sampling-method-vs-hj} in spite of using only decodes generated by a subset of context sequences.}
    \label{fig:sampling-config-vs-hj-consistency}
\end{figure*}


\end{document}